\def\BibTeX{{\rm B\kern-.05em{\sc i\kern-.025em b}\kern-.08em
    T\kern-.1667em\lower.7ex\hbox{E}\kern-.125emX}}
\newcolumntype{P}[1]{>{\centering\arraybackslash}p{#1}}
\newtheorem{theorem}{Theorem}
\newtheorem{assumption}{Assumption}
\newtheorem{proposition}{Proposition}
\DeclareRobustCommand{\qed}{%
  \ifmmode % if math mode, assume display: omit penalty etc.
  \else \leavevmode\unskip\penalty9999 \hbox{}\nobreak\hfill
  \fi
  \quad\hbox{\qedsymbol}}
\newcommand{\openbox}{\leavevmode
  \hbox to.77778em{%
  \hfil\vrule
  \vbox to.675em{\hrule width.6em\vfil\hrule}%
  \vrule\hfil}}
\newcommand{\qedsymbol}{\openbox}
\newenvironment{proof}[1][\proofname]{\par
  \normalfont
  \topsep6\p@\@plus6\p@ \trivlist
  \item[\hskip\labelsep\itshape
    #1.]\ignorespaces
}{%
  \qed\endtrivlist
}
\newcommand{\proofname}{Proof}
\newcommand{\indep}{\perp \!\!\! \perp}
\begin{document}

\title{To Predict or to Reject: Causal Effect Estimation with Uncertainty on Networked Data}
%\title{Estimating Causal Effects with Uncertainty on Networked Data
%\thanks{Identify applicable funding agency here. If none, delete this.}}

%\author{Hechuan Wen, Tong Chen, LiKheng Chai, Shazia Sadiq, Kai Zheng, Hongzhi Yin\\ School of Information Technology and Electrical Engineering, The University of Queensland, Brisbane, Australia\\ Health and Wellbeing Queensland, Brisbane, Australia\\ School of Computer Science and Engineering, University of Electronic Science and Engineering of China, Chengdu, China}

\author{\IEEEauthorblockN{Hechuan Wen$^{1}$, Tong Chen$^{1*}$, Li Kheng Chai$^{2}$, Shazia Sadiq$^{1}$, Kai Zheng$^{3}$, Hongzhi Yin$^{1}$}
\IEEEauthorblockA{\textit{$^{1}$The University of Queensland, Australia} \\
\textit{$^{2}$Health and Wellbeing Queensland, Australia}\\
\textit{$^{3}$University of Electronic Science and Technology of China, China}\\
\{h.wen, tong.chen, h.yin1\}@uq.edu.au, likheng.chai@hw.qld.gov.au, shazia@eecs.uq.edu.au, zhengkai@uestc.edu.cn}
\thanks{$^*$Tong Chen is the corresponding author.}
}

\maketitle

\begin{abstract}

Due to the imbalanced nature of networked observational data, the causal effect predictions for some individuals can severely violate the positivity/overlap assumption, rendering unreliable estimations. Nevertheless, this potential risk of individual-level treatment effect estimation on networked data has been largely under-explored. To create a more trustworthy causal effect estimator, we propose the uncertainty-aware graph deep kernel learning (GraphDKL) framework with Lipschitz constraint to model the prediction uncertainty with Gaussian process and identify unreliable estimations. To the best of our knowledge, GraphDKL is the first framework to tackle the violation of positivity assumption when performing causal effect estimation with graphs. With extensive experiments, we demonstrate the superiority of our proposed method in uncertainty-aware causal effect estimation on networked data. The code of GraphDKL is available at \url{https://github.com/uqhwen2/GraphDKL}.

\end{abstract}

\begin{IEEEkeywords}
causal effect estimation, networked data, uncertainty quantification, feature collapse
\end{IEEEkeywords}

\section{Introduction \label{section:intro}}
Estimating causal effect to support decision-making in high-stake domains such as healthcare, education, and e-commerce is crucial. With the prevalence of networked data, \cite{guo2020learning} has recently started exploring both the features of individuals (i.e., nodes) and their structural connectivity (i.e., edges) with graph neural networks (GNNs) for causal effect estimation. 

Owing to the inherent nature of observational data, violation of positivity is inevitable yet potentially devastating for causal effect estimation at the individual level, as the low-confidence predictions on non-overlapping samples may suggest a wrong treatment or introduce false hope \cite{jesson2021quantifying}. For networked data where individuals are mutually connected, the violation of positivity is further amplified because of the presence of additional structural information. As shown in Figure \ref{fig:overlap} (a), to predict the health status of older users (i.e., control group) after using dietary supplements, one may train a causal estimator based on observational data from younger users (i.e., treated groups). 
 However, as Figure \ref{fig:overlap} (b) depicts, although decent counterfactual estimations can be made within the overlapping area, a higher risk exists when estimating in the non-overlapping area of a different group. In worse scenarios, the predicted treatment outcomes contradict the ground truth, leading to a false recommendation with adverse effect. 

\begin{figure}[t!]
  \centering

  \subfigure[]{\includegraphics[width=0.24\textwidth]{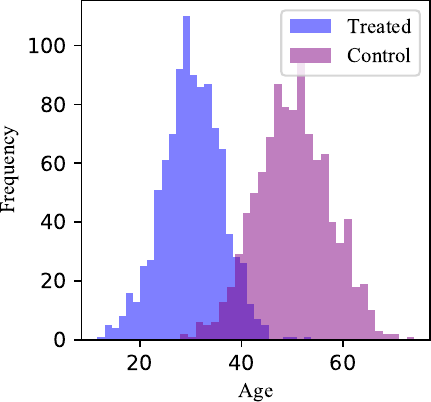}}
  \subfigure[]{\includegraphics[width=0.23\textwidth]{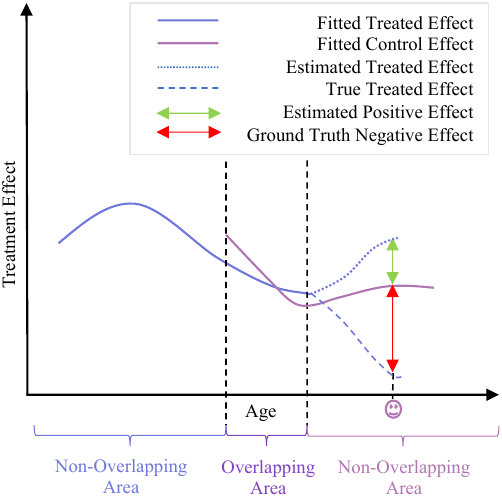}}
\vspace{-0.4cm}
  \caption{(a) Histogram of two treatment groups on a one-dimensional toy dataset w.r.t. age. (b) The high risk of causal effect estimation in the non-overlapping area due to violation of positivity.}
  \label{fig:overlap}
\vspace{-0.5cm}
\end{figure}

Thus, instead of blindly making recommendations based on low-confidence predictions on individual treatment effect, a more desirable capability of a causal estimator is to flag every highly uncertain estimation resulted from violation of positivity, which can be deferred for human inspections and used to guide improvements on the observational data collection process.
%to create a safer individual-level treatment recommendation.
However, existing solutions for measuring the uncertainty of each counterfactual prediction \cite{jesson2020identifying} are predominantly centered around tabular data without any inter-dependencies among samples. This renders existing uncertainty-aware methods unable to capture the nuanced divergence between samples in graph-structured data, given the combinatorial impact from not only individuals' own variables but also their connections with others in the network.

To fill the gap in uncertainty-aware causal estimation with networked data in the presence of positivity violation, we propose our Graph Deep Kernel Learning (GraphDKL) framework which offers uncertainty estimation to identify the likely unreliable counterfactual predictions. We introduce Gaussian process (GP) to the GNN architecture, so as to let the causal effect estimator benefit from the probabilistic nature of GP by referring to the derived prediction variance as a precise indicator of the estimator's confidence in each prediction it makes. To increase the scalability of GraphDKL on large graphs, we further design a sparse variational optimization process to replace the time-consuming covariate matrix inversion in GP with a more computationally tractable learning objective, which significantly reduces the complexity from $\mathcal{O}(N^{3})$ to $\mathcal{O}(M^{2}N)$ with $N$ being the number of training samples ($M\ll N$). Meanwhile, another notable obstacle with deep architectures used for causal effect estimation is the feature collapse issue \cite{van2021feature}, i.e., two distinct raw data points can share nearly identical representations after being mapped to the latent space via deep layers. Despite the richer information embedded after the graph convolution, the collapse of different individuals' latent representations can seriously hinder the uncertainty quantification. For instance, a sample from the non-overlapping area is intuitively associated with a stronger uncertainty in its counterfactual prediction. However, in the networked data, if it is connected to one or more samples from the overlapping area, then its representation learned via GNN's message passing is likely to possess high similarity with its neighbors' representations. Consequently, uncertainty quantification based on the learned representations will assign the same individual from the non-overlapping area with a low uncertainty (i.e., high prediction confidence), which is misleading and thus undesirable. To mitigate the feature collapse, we constrain our GraphDKL model with Lipschitzness \cite{behrmann2019invertible} to preserve the local distances in the latent space, such that the semantic manifold of the original variables is preserved in every intermediate latent space during the sequential, layer-by-layer neural mapping. Hence, predictions on high- and low-confidence samples can be effectively distinguished, making it possible for uncertainty-aware causal effect estimation on networked observational data.

%To the best of our knowledge, GraphDKL is the first framework to tackle the violation of positivity assumption when performing causal effect estimation with graphs. The experiment results imply that GraphDKL is able to obtain higher overall accuracy and precisely identify uncertain predictions when performing individual treatment effect estimation. 
%\item We propose GraphDKL, a novel Lipschitz-constrained method that can jointly predict counterfactual outcomes and quantify the uncertainty of individual prediction on networked data.

%To assign proper uncertainty to each sample, we integrate the Lipschitz constraint to both the GNN (node representation learner) and DNN (part of downstream effect estimator) within our GraphDKL framework. Leaving our proposed Lipschitz GraphDKL the most effective rejection method in spotting the unreliable estimation for which the positivity is violated.

%\item  The experiment results imply that GraphDKL is able to obtain higher overall accuracy and more precisely identify uncertain predictions when performing individual treatment effect estimation. 
%\end{itemize}

\vspace{-0.2cm}
\section{Related Work}

\textbf{Deterministic Model For Causal Effect Estimation}. So far, various neural methods \cite{shalit2017estimating,shi2019adapting,zheng2023dream} have been proposed due to the proliferation of deep learning (DL). These parametric models are good at modelling the individual-level causal effect and are applicable to unseen instances. Up to date, causal effect estimation has been extended to the graph domain \cite{guo2020learning}, where the rich relational information is utilized to learn more robust deconfounded latent representations. However, all the above-mentioned models are deterministic, which can result in over-confident estimations \cite{wang2021rethinking} and is incapable of quantifying the prediction uncertainty to inform the causal estimation failure when the positivity assumption is violated.

\textbf{Probabilistic Model For Causal Effect Estimation}. It is noted that some attention has been paid to quantifying the predictive uncertainty in causal effect estimation with non-graph data. For example, the light-weight models BART \cite{chipman2010bart} and CMGP \cite{alaa2017bayesian} can offer predictive uncertainty for causal effect estimation, but they lack strong expressive power and fail to capture the complex relationship when modelling the high-dimensional data. To fix this issue, \cite{jesson2020identifying} and \cite{zhang2020learning} leverage deep Bayesian methods to enhance the expressive power and become more capable than BART and CMGP. However, little attention has been paid to estimating the causal effect on network data with uncertainty.
%Although many efforts have been made on the non-graph data, little attention has been paid to estimating the causal effect on network data with uncertainty to create a safer causal estimator. Our work fills the gap by proposing the GraphDKL framework, which extends the current deterministic graph-based causal frameworks to a probabilistic one. Owing to the DL architecture and sparse GP module, our proposed method overcomes the limited expressiveness and inefficient uncertainty measurement simultaneously.

\section{Preliminaries}

We aim to estimate individual treatment effect (ITE) on the networked data $(\{\textbf{x}_{i}, t_{i}, y_{i}\}_{i=1}^{N}, \mathbf{A})$, where $\textbf{x}_i$, $t_i$, $y_i$ are respectively the raw variables, observed treatment, treatment outcome that correspond to the $i$-th individual, and $\mathbf{A}\in \{0,1\}^{N\times N}$ is the adjacency matrix indicating the connections between individuals, which can be obtained via consanguinity, doctor referrals, social networks, etc. The common practice is to learn a deconfounded latent representation $\textbf{z}_i$ for each individual with a GNN by aggregating its neighbour information~\cite{guo2020learning}, which is then used for counterfactual ITE prediction. To achieve this, three common assumptions are needed to lay the theoretical foundation.

\begin{assumption}[Stable Unit Treatment Value Assumption (SUTVA)]
For any individual $i$: (a) the potential outcomes for $i$ do not vary with the treatment assigned to other individuals; and (b) there are no different forms or versions of each treatment that may lead to different potential outcomes.
\end{assumption}

\begin{assumption}[Unconfoundedness]
Treatment assignment is independent to the potential outcomes $\{Y_{t=0}, Y_{t=1}\}$ given the latent covariate \textbf{z}, i.e., $t\indep \{Y_{t=0}, Y_{t=1}\} | \textbf{z}$. Note that the potential outcomes $Y$ use a different notation w.r.t. the observed ones $y$.
\end{assumption}

\begin{assumption}[Positivity]
For every $\textbf{z}$, the treatment assignment mechanism obeys: $0<p(t=1|\textbf{z})<1$. %for $\eta\in (0,0.5).$
\end{assumption}

\subsection{Hurdles in Quantifying Uncertainty with Graph Data}
Based on the latent representation $\textbf{z}$ of each individual, we aim to estimate its treatment effect and assign an uncertainty to this estimation. Unfortunately, uncertainty quantification can be seriously poisoned by the feature collapse issue \cite{van2021feature}, especially for latent features extracted by deep neural networks (DNNs). Feature collapse describes the scenario where two distinct points in the original feature space $\mathcal{X}$ can be mapped to two similar or even identical positions in the latent space $\mathcal{Z}$. Consequently, predictions on non-overlapping samples could be incorrectly assigned an uncertainty as low as predictions on overlapping ones due to their collapsed representations. 

Despite the popularity of GNNs in learning individual representations for ITE estimation, little attention has been paid to the potential feature collapse issue. So far, the state-of-the-art GNN backbones, e.g, GraphSAGE \cite{hamilton2017inductive}, rely on non-linear mapping and are hence vulnerable to feature collapse. The message-passing scheme in GNNs potentially deteriorates the uniqueness of learned representations even further. For a proof-of-concept, we generate a toy graph with two-dimensional node features and four classes, as shown in Figure \ref{fig:feature_collapse} (a). We train a 1-layer GraphSAGE with classes 0, 1 and 2, and nodes from class 3 are held out. The two-dimensional visualization in Figure \ref{fig:feature_collapse} (b) shows that the representations generated by the trained GraphSAGE for class 3 nodes mostly collapse with class 0 in the latent space. Such collapse is different from the over-smoothing problem with GNNs, as only a shallow 1-layer structure is used and the node representations for the first three classes do not collide.

\begin{figure}[t!]
  \centering

  \subfigure[]{\includegraphics[width=0.22\textwidth]{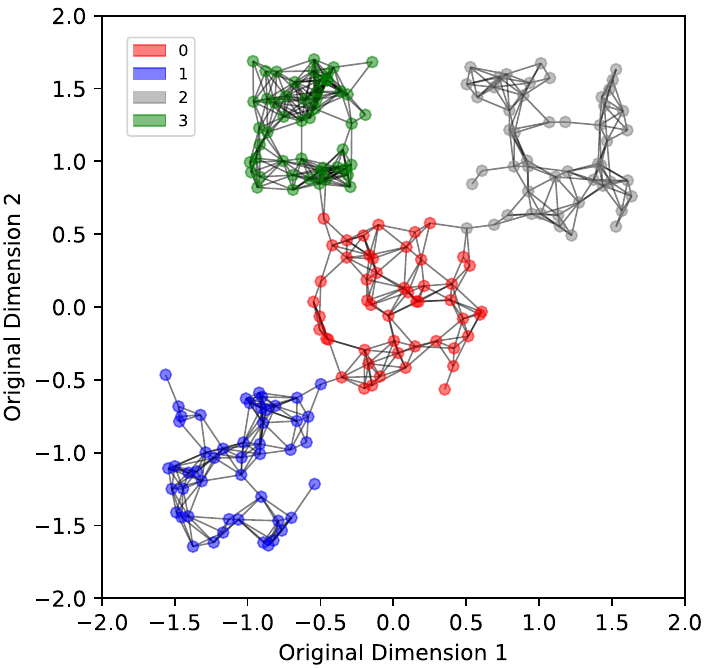}}
  \subfigure[]{\includegraphics[width=0.21\textwidth]{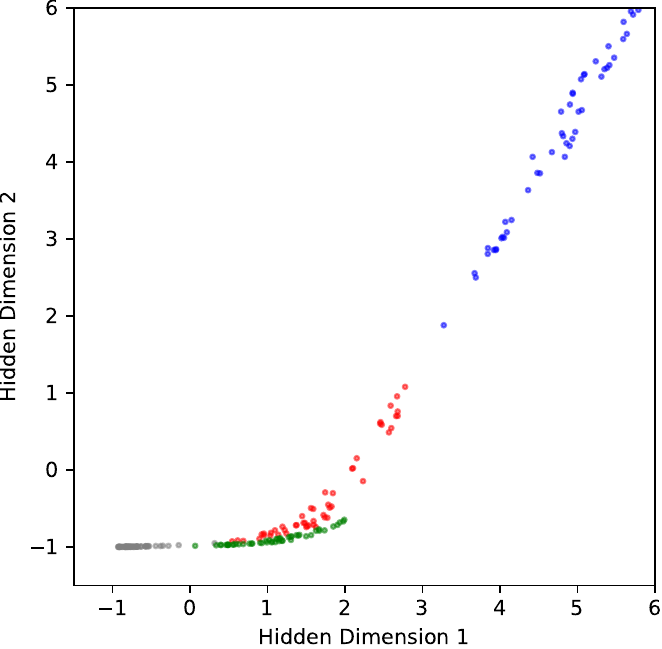}}
  %\subfigure[]{\includegraphics[width=0.2\textwidth]{}}
\vspace{-0.4cm}
  \caption{(a) A toy example graph with four classes; (b) Latent representation from a 1-Layer GraphSAGE.}
  \label{fig:feature_collapse}
\vspace{-0.5cm}
\end{figure}

\section{Methodology}

In this section, we present our graph deep kernel learning (GraphDKL) framework for handling the causal effect estimation with uncertainty on graph-structured data.

\subsection{Lipschitz-constrained Graph Representation Learning}
As a versatile framework, GraphDKL is agnostic to any GNNs. Without loss of generality, we leverage GraphSAGE \cite{hamilton2017inductive} as the base GNN given its balance between efficiency and effectiveness, and the ability to scale with batch training. At the $l$-th layer of GraphSAGE, the core neural operation to learn the latent representation for individual/node $i$ is:
\begin{equation}
\textbf{h}_{i}^{l} = \sigma(\textbf{W}_{l}\cdot\text{MEAN}(\{\textbf{h}_{i}^{l-1}\}\cup\{\textbf{h}_{j}^{l-1}, \forall j\in \mathcal{N}(i)\})),
\label{eq:sage_conv}
\end{equation} where $\sigma$ is the non-linear activation, $\textbf{W}_{l}$ is the weight matrix at layer $l$, while the mean aggregator MEAN($\cdot$) is used to merge the representations of node $i$ and its neighbours $j\in \mathcal{N}(v)$ from layer $l-1$.

To facilitate uncertainty estimation, our proposed framework, shown in Figure \ref{fig:graphdkl}, combines the GNN with deep kernel-based GP to get the best of both worlds -- the deconfounded node representations containing both individual features and structural information are extracted via GNN first, and the learned representations are fed into two independent DNNs, with each of them mapping the graph-based representations to treated and control latent spaces for subsequent predictions. For notation simplicity, we omit the formulation of each DNN, which is a multi-layer perceptron (MLP) with $L'$ layers, and takes the final-layer representation $\textbf{h}_{i}^{L}$ from GNN as its input. Unless specified, the following descriptions on DNNs apply to both treatment branches $t\in\{0,1\}$.

\textbf{Decoupling Collapsed Representations}. To alleviate feature collapse and ensure accurate uncertainty estimation, we propose to preserve the local distance among points after non-linear mapping. In GraphDKL, this constraint needs to be enforced in both the GNN for learning individual representations, as well as the two DNN branches that respectively model treated and control groups. %$\phi:\mathcal{Z}\times t\rightarrow\mathcal{H}$, should be local distance preserving. 
In a nutshell, the distance $||\textbf{x}_i -\textbf{x}_j||$ between any two points $\textbf{x}_i$ and $\textbf{x}_j$ from the raw feature space has a corresponding meaningful distance in the latent space. To achieve this desired property, we introduce the notion of Lipschitz constant. Specifically, for each given function $\textbf{s}' = f(\textbf{s})$ with  input $\textbf{s}$ and output $\textbf{s}'$, then the Lipschitz constant $\text{Lip}(f)$ w.r.t. $f(\cdot)$ satisfies that, for any pair of inputs $(\textbf{s}_1, \textbf{s}_2)$,  $||\textbf{s}_1'-\textbf{s}_2'||\leq \text{Lip}(f)||\textbf{s}_1 -\textbf{s}_2||$ holds. In other words, $\text{Lip}(f) \geq \frac{||\textbf{s}_1' -\textbf{s}_2'||}{||\textbf{s}_1-\textbf{s}_2||}$ for any $(\textbf{s}_1, \textbf{s}_2)$ pair. If $\text{Lip}(f)\leq 1$, then it essentially means that the difference in function values is controlled by the original pairwise distance obtained from the input space. This property ensures that small changes in the input result in small changes in the output, providing a sense of stability and predictability, and $f(\cdot)$ is also termed 1-Lipschitz (local distance preserving). With the context given, we define the 1-Lipschitz GraphDKL below.

%is defined as $\text{Lip}(f)$
%$|| \mathbf{g}(\textbf{x}_i, A_i) - \mathbf{g}(\textbf{x}_j, A_j) ||_{\mathcal{Z}}$ and $|| \phi\circ\mathbf{g}(\textbf{x}_i, A_i) - \phi\circ\mathbf{g}(\textbf{x}_j, A_j) ||_{\mathcal{H}}$ in the latent space $\mathcal{Z}$ and $\mathcal{H}$, respectively.

\begin{figure}[t!]
\centerline{\includegraphics[scale=0.55]{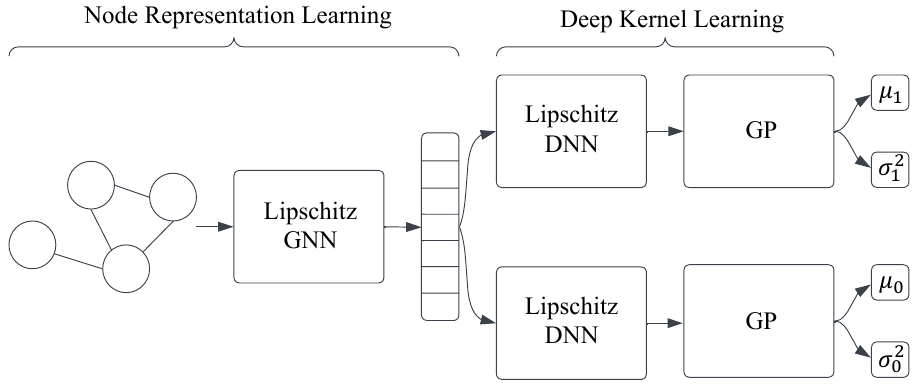}}
\caption{Structure of GraphDKL framework. The Lipschitz prefix denotes the Lipschitz-constrained neural networks.}
\label{fig:graphdkl}
\vspace{-0.5cm}
\end{figure}

\begin{theorem}[1-Lipschitz GraphDKL]
    GraphDKL has $L$ layers of graph convolution $\textbf{H}^L= g_{L}(g_{L-1}(...g_1(\mathbf{X},\mathbf{A})))$ in the GNN where $\mathbf{X}\in \mathbb{R}^{N\times D}$, $\mathbf{A}\in \{0,1\}^{N\times N}$, and $\mathbf{H}^L\in \mathbb{R}^{N\times S}$ respectively denote the $D$-dimensional raw variables, adjacency matrix, and $S$-dimensional latent representations of $N$ individuals. The GNN is followed by an $L'$-layer DNN $\textbf{Z}^{L'}=\phi_{L'}(\phi_{L'-1}(...\phi_{1}(\textbf{H}^L)))$ in either the treated/control branch, with $\textbf{Z}^{L'}\in \mathbb{R}^{N\times S}$ being the $N$ representations from the final layer. The entire representation learning pipeline in GraphDKL is 1-Lipschitz if: 
    \begin{equation}
    \left\{\begin{split}
        &\text{Lip}(g_{l})\,\,\leq1, \quad\forall l\,\,\leq L\\ 
        &\text{Lip}(\phi_{l'})\leq1, \quad\forall l'\leq L'
    \end{split}\right.,
    \end{equation}
\end{theorem}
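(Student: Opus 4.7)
The plan is to reduce the statement to the elementary fact that the composition of Lipschitz maps is Lipschitz with a multiplicative constant, and then apply this fact twice -- once along the $L$ GNN layers and once along the $L'$ DNN layers -- before chaining the two blocks together. Formally, I would first fix the adjacency matrix $\mathbf{A}$ and view each graph-convolution map $g_l$ as a function on the node-feature tensor alone, so that all maps in the pipeline act on Euclidean spaces $\mathbb{R}^{N\times \cdot}$ equipped with a fixed norm (e.g.\ Frobenius). This reduction is harmless because Lipschitzness is defined with respect to variations in the node features while the graph structure is taken as a fixed parameter of each layer.

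Next I would record the composition lemma: if $f$ and $h$ are Lipschitz with constants $K_f$ and $K_h$, then $f\circ h$ is Lipschitz with constant at most $K_f K_h$. The one-line proof is $\|f(h(\textbf{s}_1))-f(h(\textbf{s}_2))\|\leq K_f\|h(\textbf{s}_1)-h(\textbf{s}_2)\|\leq K_f K_h\|\textbf{s}_1-\textbf{s}_2\|$, using only the definition of Lipschitz constant. A straightforward induction on the number of layers then yields
\begin{equation}
\text{Lip}(g_L\circ\cdots\circ g_1)\leq\prod_{l=1}^{L}\text{Lip}(g_l)\leq 1,
\end{equation}
and analogously $\text{Lip}(\phi_{L'}\circ\cdots\circ\phi_1)\leq 1$. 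Applying the composition lemma once more to the two blocks shows that the full pipeline $\phi_{L'}\circ\cdots\circ\phi_1\circ g_L\circ\cdots\circ g_1$ has Lipschitz constant bounded by $1\cdot 1=1$, establishing the theorem.

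The main obstacle, as I see it, is not the arithmetic of the composition but the bookkeeping needed to make the notion of Lipschitzness for the GNN layers precise. The graph-convolution layer in Eq.~\eqref{eq:sage_conv} is defined node-wise as a non-linear activation applied to a weight-multiplied mean of neighbour embeddings, so one must be careful that the hypothesis $\text{Lip}(g_l)\leq 1$ is interpreted with the same norm used to compare successive intermediate representations $\mathbf{H}^{l-1}$ and $\mathbf{H}^{l}$. Once this is fixed -- for instance by adopting the Frobenius norm on $\mathbb{R}^{N\times S}$ throughout -- the composition argument carries through verbatim, because the mean aggregator and the activation $\sigma$ are themselves contractive and the only non-trivial factor is the spectral norm of $\textbf{W}_l$, which is exactly what a practical Lipschitz constraint on $g_l$ would control. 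No other subtleties arise, so the statement reduces to a clean application of the composition property once the norms are aligned.
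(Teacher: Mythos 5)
Your proposal is correct and follows essentially the same route as the paper: the paper likewise decomposes the end-to-end map layer by layer and multiplies the per-layer Lipschitz bounds, writing the bound as a telescoping product of distance ratios across the $L$ GNN layers and $L'$ DNN layers. Your formulation via the chained composition inequality is marginally more careful, since it avoids the paper's division by intermediate distances (which could vanish if two representations coincide at some layer), but the underlying argument is identical.
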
 where $\text{Lip}(g_{l})$ and $\text{Lip}(\phi_{l'})$ respectively denote the Lipschitz constant of a single GNN and DNN layer.
\begin{proof}
    We denote the hidden representation of individual $i$ at the $l$-th GNN layer as $\textbf{h}^{l}_{i}$, and that of the same $i$ at the $l'$-th DNN layer as $\textbf{z}^{l'}_{i}$. Note that the raw feature $\textbf{x}_i$ is the input to the first-layer GNN, whose final-layer representation $\textbf{h}^L_i$ is the input to the first-layer DNN. Then, for any pair of instances $(i,j)$, we have:
    \begin{equation}
    \begin{split}
        \frac{||\textbf{z}_{i}^{{L'}}-\textbf{z}_{j}^{{L'}} ||}{||\textbf{x}_{i}-\textbf{x}_{j}||}
        = &\frac{|| \textbf{z}_{i}^{{L'}}-\textbf{z}_{j}^{{L'}} ||}{||\textbf{z}_{i}^{L'-1}-\textbf{z}_{j}^{L'-1}||}\times \cdots \times \frac{|| \textbf{z}_{i}^{{1}}-\textbf{z}_{j}^{{1}} ||}{||\textbf{h}_{i}^{L}-\textbf{h}_{j}^{L}||}\times\\
        &\frac{||\textbf{h}_{i}^{L}-\textbf{h}_{j}^{L}||}{||\textbf{h}_{i}^{L-1}-\textbf{h}_{j}^{L-1}||} \times \cdots \times \frac{|| \textbf{h}_{i}^{1}-\textbf{h}_{j}^{1} ||}{||\textbf{x}_{i}-\textbf{x}_{j}||}\\
        \leq& \text{Lip}(\phi_{L'})\!\!\times\!\!\cdots\!\!\times\!\!\text{Lip}(\phi_{1})\!\!\times\!\!\text{Lip}(\mathbf{g}_{L})\!\!\times\!\!\cdots\!\!\times\!\!\text{Lip}(\mathbf{g}_{1}).
        %\leq& 1\times\dots\times1\times1\times\dots\times1 = 1.
    \end{split}
    \end{equation}
    As every $\text{Lip}(g_{l})$ and $\text{Lip}(\phi_{l'})$ is no larger than 1 by premise, $\frac{||\textbf{z}_{i}^{{L'}}-\textbf{z}_{j}^{{L'}} ||}{||\textbf{x}_{i}-\textbf{x}_{j}||}\leq 1$. We thus conclude that the entire neural mapping from $\textbf{x}_i$ to $\textbf{z}_i^{L'}$ in GraphDKL is 1-Lipschitz.
    %We can conclude that our neural mapping $\mathcal{G}$ is 1-Lipschitz according to the Lipschitz constant definition. 
\end{proof}

To ensure the local distance is preserved for each neural mapping layer, \textit{spectral normalization} has been proven rigorous \cite{behrmann2019invertible} for enforcing 1-Lipschitz. Taking the weight matrix $\textbf{W}_{l}$ at the $l$-th graph convolution layer in \eqref{eq:sage_conv} as an example, the spectral normalization states:
\begin{equation}
    \text{Lip}(\mathbf{g}_{l})\leq 1, \quad \text{if}\, ||\textbf{W}_{l}||_{2}\leq 1,
\end{equation} where $||\cdot||_{2}$ denotes the spectral norm, i.e., $L_2$ matrix norm of $\textbf{W}_{l}$. Compared with using the spectral norm as a penalization term for \textit{regularization} purpose, we formulate a \textit{normalization} process that strictly bounds the spectral norm to a designated value, and this ensures obedience of 1-Lipschitz at all layers and thus benefit the measurement of uncertainty. 

As $||\textbf{W}_{l}||_{2}$ corresponds to the largest singular value of matrix $\textbf{W}_{l}$ which is known to be time-consuming to compute exactly, we perform power iteration \cite{gouk2021regularisation} over $\textbf{W}_{l}$ to obtain an approximation $\tau$ of the spectral norm, which is a lower bound on the largest singular value $||\textbf{W}_{l}||_{2}$. Then, the weight matrix is normalized as:
\begin{equation}
\overline{\textbf{W}}_{l}=
 \frac{1}{\tau}\textbf{W}_{l},
\end{equation} which empirically makes $\text{Lip}(g_{l})\leq1$ consistent across all scenarios by rescaling $\textbf{W}_{l}$ \cite{behrmann2019invertible}. Analogously, the same spectral normalization is adopted on all DNN layers' weight matrices. 

\subsection{Deep Kernel Learning}
In each treatment effect prediction branch, the deep kernel learning (DKL) module passes the latent representation $\textbf{z}^{L'}_i$ from the final DNN layer into a Gaussian process (GP) for causal effect estimation with uncertainty. From now on, we let $\textbf{z}_i=\textbf{z}^{L'}_i$ for better clarity when there is no ambiguity. 

\textbf{Standard GP}. A standard GP is a finite number of random variables which have a joint Gaussian distribution \cite{williams2006gaussian}. Mathematically, it is denoted as $\mathcal{GP}$, with mean function $\mu(\cdot)\!:\!\mathcal{X}\!\rightarrow\!\mathbb{R}$ and covariance function $k(\cdot,\cdot)\!:\!\mathcal{X}\!\times\!\mathcal{X}\!\rightarrow\!\mathbb{R}$ over the real-valued stochastic function $f(\cdot)\!:\!\mathcal{X}\!\rightarrow\!\mathbb{R}$ whose input is the $D$-dimensional variable vector $\textbf{x}\in\mathbb{R}^{D}$, namely, 
\begin{equation}
    f(\textbf{x})\sim\mathcal{GP}(\mu(\textbf{x}), k(\textbf{x},\textbf{x}')).
    \label{eq:function}
\end{equation}
By evaluating the GP at $N$ samples $\textbf{Z}=\{\textbf{z}_{i}\}_{i=1}^{N}$ (any subset from domain $\mathcal{X}$), we end up with $N$ multivariate Gaussian distributions $\textbf{f}=\{\textbf{f}_{i}\}_{i=1}^{N}$ as follows:
\begin{equation}\label{eq:f}
    \textbf{f}\sim\mathcal{N}(\bm{\mu}, \textbf{K}),
\end{equation} 
where $\bm{\mu}\in \mathbb{R}^N$ is the variance vector and $\textbf{K}\in \mathbb{R}^{N\times N}$ is the covariance matrix. With $i,j$ for indexing, $\bm{\mu}[i]=\mu(\textbf{x}_{i})$ is $i$'s mean, $\textbf{K}[i,j]=k(\textbf{x}_{i},\textbf{x}_{j})$ is the covariance between $i$ and $j$. 

\textbf{Ramping up Expressiveness}. Given the limited capacity of standard GP in learning the latent distributions \cite{damianou2013deep,wilson2016deep}, recent frameworks have been expanding the expressiveness of the standard GP. For instance, deep Gaussian process \cite{damianou2013deep} stacks a series of GPs, and deep kernel \cite{wilson2016deep} utilizes the latent variables produced from a deep learning method for the GP. %are capable of learning complex patterns from the high-dimensional dataset. 
In this paper, we investigate the deep kernel framework since it is a natural extension to our neural architecture, as well as its superiority in expressiveness and computational efficiency. Specifically, by replacing the raw variables $\textbf{x}$ with the latent output $\textbf{z}$ from GraphDKL's neural mapping, \eqref{eq:f} is updated with $\bm{\mu}[i]=\mu(\textbf{z}_i)$ and $\textbf{K}[i,j]=k(\textbf{z}_{i},\textbf{z}_{j})$.

In GP, a mean of zero is normally assumed, i.e., $\bm{\mu}=\textbf{0}$, and we consider the infinitely smooth radial basis function as the kernel for computing the covariance, i.e., $k(\textbf{z}_{i},\textbf{z}_{j})=\sigma_{ker}^{2}\exp{(-\frac{(\textbf{z}_{i}-\textbf{z}_{j})^{2}}{2l^{2}})}$, where $\boldsymbol{\theta}_{ker}=\{\sigma_{ker}, l\}$ is a parameter set of the GP kernel to be optimized. With $N$ latent representations $\textbf{Z}\in \mathbb{R}^{N\times S}$ learned from $\textbf{X}$ and their corresponding real-valued labels $\textbf{y}=\{y_{i}\}_{i=1}^{N}$, we can obtain the joint marginal likelihood w.r.t. the updated \eqref{eq:f} as follows:
\begin{equation}
    p(\textbf{y}|\textbf{Z},\boldsymbol{\theta}_{ker}) = \int \prod_{i=1}^{N} p(y_{i}|\textbf{f}_{i},\textbf{z}_{i})p(\textbf{f}_{i}|\textbf{z}_{i})d\textbf{f} = \mathcal{N}(\textbf{y}|\textbf{0}, \textbf{K}),
    \label{eq:exact}
\end{equation} where the optimal kernel parameters are obtained by finding its maximum through gradients.

\textbf{ITE Prediction with Uncertainty Quantification.} To evaluate the model at an arbitrary test point $\textbf{x}_{*} \in \mathbb{R}^{D}$ with corresponding latent representation $\textbf{z}_{*}$, we leverage the property that the joint distribution of the training labels \textbf{y} and the test label $y_{*}$ is still Gaussian:
\begin{equation}
\begin{pmatrix}
y_{*}\\
\textbf{y}
\end{pmatrix}\sim\mathcal{N}\begin{pmatrix}
\textbf{0}
,& \begin{bmatrix}
k(\textbf{z}_{*},\textbf{z}_{*})& \textbf{k}^{\text{T}}_{*}\\
\textbf{k}_{*}& \textbf{K}
\end{bmatrix}
\end{pmatrix},
\label{eq:joint_Gaussian}
\end{equation} where column vector $\textbf{k}_{*} \in \mathbb{R}^N$ measures the covariance between $\textbf{z}_{*}$ and all $N$ training samples, i.e., $\textbf{k}_{*}[i]=k(\textbf{z}_{*},\textbf{z}_{i})$. Thus, the posterior label distribution is:
\begin{equation}
y_{*}|(\textbf{z}_{*}, \textbf{Z}, \textbf{y}, \boldsymbol{\theta}_{ker})\sim\mathcal{N}(\mu_{*}, \sigma_{*}^{2}),
\label{eq:conditional}
\end{equation}
which has the following closed-form solution \cite{williams2006gaussian}:
\begin{equation}
    \mu_{*}=\textbf{k}_{*}^{\text{T}}\textbf{K}^{-1}\textbf{y},\,\,\,
    \sigma_{*}^{2} = k(\textbf{z}_{*},\textbf{z}_{*})-\textbf{k}_{*}^{\text{T}}\textbf{K}^{-1}\textbf{k}_{*},
\end{equation}
 where the mean $\mu_{*}$ serves as the prediction of $y_*$ w.r.t. a treatment $t\in\{0,1\}$, and the variance $\sigma_{*}^{2}$ of the prediction is used as a direct indicator of the uncertainty w.r.t. sample $\textbf{x}_*$. Essentially, the value of $\sigma_{*}^{2}$ holds a positive correlation with the uncertainty.

\subsection{Sparse Variational Optimization}

If the exact Gaussian process were applied to our proposed GraphDKL framework, the model would suffer from a $\mathcal{O}(N^{3})$ complexity due to the inversion of the covariance matrix $\textbf{K}\in\mathbb{R}^{N\times N}$, which is computationally prohibitive when handling large graphs. To increase scalability, we adopt a sparse GP \cite{williams2006gaussian} with stochastic variational inference (SVI) \cite{salimbeni2017doubly} to our setting, building a computationally tractable GraphDKL framework with the ability to scale better. %Note, that we can derive an analytical objective for Gaussian process regression without being approximated by SVI, while no such objective can be obtained for Gaussian process classification. Overall, SVI is applicable and more scalable across both tasks.

We start by assuming a set of latent inducing points $\textbf{M}=\{\textbf{m}_{i}\}_{i=1}^{M}$ in the same latent space as $\textbf{Z}$. For the stochastic function $f(\cdot)$ with Gaussian prior in \eqref{eq:function}, we obtain the corresponding outputs $\textbf{v}=f(\textbf{Z})$ and $\textbf{u}=f(\textbf{M})$ w.r.t. the distributions of $\textbf{Z}$ and $\textbf{M}$ in the same space, respectively.  To form a tractable objective, we derive the evidence lower bound $\mathcal{L}$ as follows:
\vspace{-0.3cm}
\begin{equation}
\begin{split}
    \log p(\textbf{y}) &\!= \log\! \int\! p(\textbf{y},\textbf{v},\textbf{u})d\textbf{v}d\textbf{u}
    = \log \!\int\! \frac{p(\textbf{y},\textbf{v},\textbf{u})}{q(\textbf{v},\textbf{u})}q(\textbf{v},\textbf{u})d\textbf{v}d\textbf{u}\\
    &\!=\log\mathbb{E}_{q(\textbf{v},\textbf{u})}\!\!\left[\frac{p(\textbf{y},\textbf{v},\textbf{u})}{q(\textbf{v},\textbf{u})}\right]\!\!\geq\!\!\mathcal{L}=\mathbb{E}_{q(\textbf{v},\textbf{u})}\!\!\left[\log\frac{p(\textbf{y},\textbf{v},\textbf{u})}{q(\textbf{v},\textbf{u})}\right],
\end{split}
\label{eq:ELBO}
\end{equation}
where the SVI process appproximates the posterior $q(\textbf{v},\textbf{u})$ by minimizing the Kullback-Leibler divergence $\text{KL}(q||p)$ between the variational posterior $q$ and the prior $p$ \cite{salimbeni2017doubly}. 
%A conceivable choice for the variational posterior is \cite{hensman2013gaussian}:
%\begin{equation}
%With $p(\textbf{v},\textbf{u}) = p(\textbf{v}|\textbf{u})p(\textbf{u})$, 
%    \label{eq:posterior}
%\end{equation} 
%where $q(\textbf{u})$ is set to $\mathcal{N}(\textbf{u}|\textbf{0}, \textbf{K}_{u})$ with covariance $\textbf{K}_{u}$ evaluated on $\textbf{M}$ with kernel function $k(\cdot,\cdot)$. This allows us to obtain the conditional probability $p(\textbf{v}|\textbf{u})$. %can be analytically achieved by conditioning on the prior $p(\textbf{u}) = \mathcal{N}(\textbf{u}|\textbf{0}, \textbf{K}_{\textbf{UU}})$, which is a straightforward analogy from \eqref{eq:joint_Gaussian} and \eqref{eq:conditional}.
%Thus, 
As we essentially aim to perform SVI with a set of global variables, we let $\textbf{u}$ take this role with a variational distribution $q(\textbf{u})$, and follow the widely accepted variational posterior \cite{salimbeni2017doubly} $q(\textbf{v},\textbf{u}) = p(\textbf{v}|\textbf{u})q(\textbf{u})$. We set $q(\textbf{u})\sim \mathcal{N}(\textbf{u}|\bm{\mu}_u, \textbf{K}_{u})$ with mean $\bm{\mu}_u\in\mathbb{R}^{M}$ and covariance $ \textbf{K}_{u}\in\mathbb{R}^{M\times M}$ to be learned.
%With $p(\textbf{v},\textbf{u}) = p(\textbf{v}|\textbf{u})p(\textbf{u})$, the ELBO $\mathcal{L}$ can be further decomposed, with an additiona constraint on $p(\textbf{v},\textbf{u})$:  %with posterior \eqref{eq:posterior}:
%\begin{equation}
%    \mathcal{L}=\mathbb{E}_{q(u)}\bi{q(\textbf{v})}\left[\log p(\textbf{y}|\textbf{v})\right] - \text{KL}(q(\textbf{u})|| p(\textbf{u})),
%    \label{eq:simplified_ELBO}
%\end{equation} where $\text{KL}(\cdot||\cdot)$ denotes the Kullback-Leibler divergence. Since posterior $q(\textbf{u})$ is Gaussian, the variational posterior $q(\textbf{v})$ can be analytically obtained as follows:
To this end, the evidence lower bound (ELBO) $\mathcal{L}$ can be further decomposed into the following form, with an additional constraint on $q(\textbf{u})$: %with posterior \eqref{eq:posterior}:
\begin{equation}
    \mathcal{L}=\mathbb{E}_{q(\textbf{v})}\left[\log p(\textbf{y}|\textbf{v})\right] - \text{KL}(q(\textbf{u})|| p(\textbf{u})),
    \label{eq:simplified_ELBO}
\end{equation} where $p(\textbf{u})$ is a priori. Since posterior $q(\textbf{u})$ is Gaussian, with the analytically achievable $p(\textbf{v}|\textbf{u})$ analogous to \eqref{eq:joint_Gaussian} and \eqref{eq:conditional} by conditioning on the prior $p(\textbf{u})$, the variational posterior $q(\textbf{v})$ can be analytically obtained as follows:
\begin{equation}
    q(\textbf{v})=\int p(\textbf{v}|\textbf{u})q(\textbf{u})d\textbf{u}=\mathcal{N}(\textbf{v}|\Tilde{\bm{\mu}}, \Tilde{\Sigma}),
\end{equation} where $\Tilde{\bm{\mu}}$ and $ \Tilde{\Sigma}$ are parameterized w.r.t. $\bm{\mu}_u$ and $\textbf{K}_{u}$. Therefore, the first term in the simplified 
 $\mathcal{L}$ can be calculated with Monte Carlo sampling since posterior $q(\textbf{v})$ is available with the unknown parameters $\{\bm{\mu}_u,\textbf{K}_{u}\}$ to be learned. Notably, the time complexity of sparse variational GraphDKL is significantly reduced due mainly to the smaller $M\times M$ matrix to be inverted, bringing a non-dominant $\mathcal{O}(M^{3})$ complexity. Consider the matrix multiplication in deriving $\Tilde{\Sigma}$ in \ref{eq:simplified_ELBO} for the $q(\textbf{v})$ to sample $N$ times in order to calculate $\mathcal{L}$, the asymptotic time complexity is capped to $\mathcal{O}(M^{2}N)$ \cite{williams2006gaussian,salimbeni2017doubly}. With $M\ll N$ in our case, handling large graph-structured data with GraphDKL is tractable. 

By optimizing the tractable objective $\mathcal{L}$ derived by SVI, we solve the unknown parameters such that the variational posterior $q(\textbf{v})$, which only depends on the input $\textbf{x}$, can be optimized to fit the data. Eventually, with the fully trained GraphDKL, given a test point $\textbf{x}_{*}$ (with latent representation $\textbf{z}_*$), we can use the much smaller $\textbf{K}'\in \mathbb{R}^{M\times M}$ given by priori, $\textbf{z}_*$'s covariance with those $M$ prior samples $\textbf{k}_*'\in\mathbb{R}^{M}$, and the optimized posterior $q(\textbf{u})$ to obtain prediction (i.e., mean) $\mu_{*}$ and variance (i.e., uncertainty) $\sigma^2_{*}$:  
%Now, our interested quantities -- prediction (i.e., mean) $\mu_{*}$ and variance (i.e., uncertainty) $\sigma^2_{*}$ from the posterior $q(\textbf{v})$ for a test point $\textbf{x}_{*}$ are eventually given as follows:
\begin{equation}
\mu_{*}=\Gamma\textbf{u}_{u},\,\,\,
\sigma_{*}^{2}= k(\textbf{z}_*,\textbf{z}_*)- \Gamma (\textbf{K}'-\textbf{K}_u)\Gamma^{\text{T}},
\label{eq:sparseGP_variance}
\end{equation} 
where $\Gamma={\textbf{k}_*'}^{\text{T}}{\textbf{K}'}^{-1}$.

To conclude, the final ITE estimation and its associated prediction uncertainty w.r.t. test sample $\textbf{x}_{*}$ have the following approximations:
\begin{equation}
\begin{split}
    \!\!\!\text{ITE}_{*} &=\mathbb{E}[Y_{t=1}\!-\!Y_{t=0}|\textbf{z}_{*}] \simeq \mu_{*,t=1} \!-\! \mu_{*,t=0},\\
    \!\!\!\text{Uncertainty}_{*} &=\mathbb{E}[(Y_{t=1}\!-\!Y_{t=0})^{2}|\textbf{z}_{*}] \simeq \sigma_{*,t=1}^{2} \!+\! \sigma_{*,t=0}^{2}.
\end{split}
\end{equation}
%mean prediction and its associated predictive uncertainty in \eqref{eq:mean} and \eqref{eq:variance}, respectively, are calculable afterwards.

\section{Experiments}
We present our experimental analysis in this section.

\begin{table*}[t!]
    \caption{The proportion in each column represents the fixed percentage of the test samples rejected by each method. Thus, we calculate the $\sqrt{\epsilon_{\text{PEHE}}}$ over the same-size retained test samples by averaging results from 10 simulations for each setting.}
    \vspace{-0.1cm}
    \centering
    \renewcommand{\arraystretch}{0.85}
    \setlength\tabcolsep{4pt}
    \begin{adjustbox}{scale=0.95}
    \begin{tabular}{|P{0.1cm}|>{\raggedright\arraybackslash}P{1.3cm}|>{\centering\arraybackslash}P{0.5cm}|>{\centering\arraybackslash}P{0.5cm}|>{\centering\arraybackslash}P{0.5cm}|>{\centering\arraybackslash}P{0.5cm}|>{\centering\arraybackslash}P{0.5cm}|>{\centering\arraybackslash}P{0.5cm}|>{\centering\arraybackslash}P{0.5cm}|>{\centering\arraybackslash}P{0.5cm}|>{\centering\arraybackslash}P{0.5cm}|>{\centering\arraybackslash}P{0.5cm}||>{\centering\arraybackslash}P{0.5cm}|>{\centering\arraybackslash}P{0.5cm}|>{\centering\arraybackslash}P{0.5cm}|>{\centering\arraybackslash}P{0.5cm}|>{\centering\arraybackslash}P{0.5cm}|>{\centering\arraybackslash}P{0.5cm}|>{\centering\arraybackslash}P{0.5cm}|>{\centering\arraybackslash}P{0.5cm}|>{\centering\arraybackslash}P{0.5cm}|>{\centering\arraybackslash}P{0.5cm}|}

 \hline

 %$k$&Method& 0\%& 5\%& 10\%& 15\%& 20\%& 25\%& 30\%& 35\%& 40\%& 45\%& 50\%& 55\%& 60\%& 65\%& 70\%& 75\%& 80\%& 85\%& 90\%& 95\% \\
 \multicolumn{2}{|c|}{Dataset} & \multicolumn{10}{c||}{BlogCatalog} & \multicolumn{10}{c|}{Flickr} \\
%\cline{1-22}
 \hline
 $k$&Method& 0\%& 5\%& 10\%& 15\%& 20\%& 25\%& 30\%& 50\%& 70\%& 90\%&0\%& 5\%& 10\%& 15\%& 20\%& 25\%& 30\%& 50\%& 70\%& 90\%\\
 \hline
 
 \multirow{6}*{\rotatebox{90}{0.5}}
%&Method     &0\%    &5\%    &10\%   &15\%   &20\%   &25\%   &30\%   &50\%   &70\%   &75\%   &80\%   &85\%   &90\%\\ 
%&Random     &7.94   &7.93   &7.95   &7.97   &7.92   &7.90   &7.99   &7.91   &7.73   &7.96 & 48.65&48.89&48.44&48.81&48.24&48.85&48.82&49.31&49.08&50.62\\
%&Prop. Tri. &7.93   &7.31   &6.80   &6.38   &6.11   &5.97   &5.62   &5.21   &5.04   &4.91 & 48.65&46.58&46.06&45.76&45.51&45.03&44.81&43.96&43.38&54.20\\
%&Prop. Qtl. &7.93   &7.56   &7.32   &7.06   &6.78   &6.54   &6.35   &5.82   &5.58   &5.35 & 48.64&46.76&46.21&45.97&45.45&44.91&44.32&43.45&46.86&48.43\\
&BART       &10.15  &10.18  &10.21  &10.23  &10.28  &10.30  &10.31  &10.16  &9.73   &9.27 & 8.86&8.86&8.86&8.86&8.87&8.87&8.88&8.89&8.90&8.85\\
&BCFRMMD    &7.93   &6.96   &6.24   &5.53   &5.04   &4.56   &4.25   &3.72   &3.66   &3.83 & 48.65&44.86&43.01&41.67&40.82&39.60&38.97&35.02&28.40&3.47\\
&BCEVAE     &42.76  &37.7   &33.29  &29.06  &25.59  &23.07  &21.67  &20.79  &21.25  &21.38 & 53.81&38.13&33.54&31.18&29.26&28.05&27.41&25.43&22.54&9.60\\
&CMGP       &10.89  &10.18  &9.41   &9.31   &9.32   &9.15   &8.99   &8.73   &9.14   &9.33 &10.37&5.86&4.97&4.38&4.05&3.77&3.63&3.28&2.99&2.86\\
%&GraphDKL* &4.41&\textbf{4.16}&4.01&3.89&3.80&3.71&3.66&3.57&3.78&3.65 &5.59&3.83&3.58&3.48&3.38&3.31&3.30&3.32&3.18&3.02\\
&GraphDKL &\textbf{4.31}&4.21&\textbf{3.98}&\textbf{3.80}&\textbf{3.67}&\textbf{3.48}&\textbf{3.31}&\textbf{2.90}&\textbf{2.64}&\textbf{2.21}& \textbf{3.92}&\textbf{3.24}&\textbf{3.10}&\textbf{3.01}&\textbf{2.95}&\textbf{2.92}&\textbf{2.92}&\textbf{2.84}&\textbf{2.69}&\textbf{2.46}\\
 \hline

 \hline
\multirow{6}*{\rotatebox{90}{1}}
%&Method     &0\%     &5\%     &10\%    &15\%    &20\%    &25\%    &30\%    &50\%    &70\%    &90\%    \\ 
%&Random     &10.48   &10.45   &10.47   &10.47   &10.46   &10.45   &10.58   &10.62   &10.51   &10.99   & 10.88&10.91&10.89&10.82&10.9&10.87&10.88&10.53&10.87&10.75\\
%&Prop. Tri. &10.48   &10.01   &9.86    &9.70    &9.65    &9.57    &9.45    &9.22    &9.05    &9.88    & 10.88&8.92&8.09&7.67&7.35&7.06&6.86&6.50&6.39&6.35\\
%&Prop. Qtl. &10.48   &10.16   &10.06   &10.01   &9.90    &9.83    &9.78    &9.52    &9.42    &9.35    & 10.88&9.44&8.79&8.28&7.92&7.70&7.50&6.99&6.85&6.86\\
&BART       &12.97   &12.89   &12.94   &12.96   &13.01   &13.07   &13.15   &13.00   &12.16   &11.63   & 15.70&15.70&15.7&15.7&15.62&15.63&15.64&15.46&15.41&15.51\\
&BCFRMMD    &10.45   &9.29    &8.58    &7.93    &7.37    &6.90    &6.46    &6.13    &6.33    &6.69    & 10.87&7.59&6.12&5.20&4.59&4.14&\textbf{3.81}&\textbf{3.37}&\textbf{3.40}&\textbf{3.61}\\
&BCEVAE     &36.75   &33.11   &29.97   &27.51   &24.93   &22.99   &21.85   &20.83   &21.40   &21.82   & 24.63&16.09&12.99&11.36&10.67&10.33&10.11&9.91&10.03&10.06\\
&CMGP       &11.46   &9.84    &9.18    &8.69    &8.44    &8.01    &7.70    &7.06    &6.73    &6.76    & 18.51&10.15&8.28&7.23&6.58&6.17&5.87&5.28&5.15&6.41\\
%&GraphDKL*&5.37&4.79&4.62&4.57&4.53&4.52&4.48&4.57&4.80&4.86&10.18&5.51&4.99&4.77&4.68&4.63&4.53&4.37&4.38&4.31\\
&GraphDKL & \textbf{5.08}&\textbf{4.79}&\textbf{4.48}&\textbf{4.32}&\textbf{4.17}&\textbf{4.00}&\textbf{3.90}&\textbf{3.76}&\textbf{3.63}&\textbf{2.97}&\textbf{7.29}&\textbf{4.49}&\textbf{4.23}&\textbf{4.11}&\textbf{4.04}&\textbf{3.97}&3.93&3.91&3.91&3.71\\
 \hline

 \hline
 \multirow{6}*{\rotatebox{90}{2}}
%&Method     &0\%     &5\%     &10\%    &15\%    &20\%    &25\%    &30\%    &50\%    &70\%    &90\%    \\  
%&Random     &26.88   &26.80   &26.90   &26.84   &26.79   &27.21   &26.70   &26.91   &26.67   &26.49   & 18.91&18.87&18.93&18.86&18.91&18.71&19.09&18.93&19.28&16.84\\
%&Prop. Tri. &26.85   &25.55   &24.87   &24.56   &24.11   &23.80   &23.44   &22.72   &21.71   &21.64   & 18.88&15.52&14.35&13.66&12.95&12.46&12.10&11.23&10.92&10.81\\
%&Prop. Qtl. &26.86   &25.86   &25.42   &25.01   &24.70   &24.40   &23.94   &23.05   &22.35   &22.21   & 18.86&16.30&15.07&14.32&13.86&13.45&13.11&11.97&11.48&11.51\\
&BART       &30.96   &31.32   &31.51   &31.65   &31.97   &31.63   &31.48   &31.78   &30.10   &29.50   & 28.80&28.80&28.81&28.75&28.76&28.76&28.73&28.53&28.8&28.38\\
&BCFRMMD    &26.83   &23.31   &21.09   &19.11   &17.55   &16.12   &15.06   &14.15   &14.56   &15.14   & 18.80&13.20&10.59&9.05&7.95&7.12&6.50&5.71&5.83&6.22\\
&BCEVAE     &42.76   &37.70   &33.29   &29.06   &25.59   &23.07   &21.67   &20.79   &21.25   &21.38   & 20.95&13.76&11.31&10.09&9.49&9.20&8.96&8.55&8.60&8.82\\
&CMGP       &28.20   &25.00   &23.87   &23.06   &22.70   &22.32   &22.10   &21.19   &21.39   &23.20   & 34.66&20.24&17.05&14.84&13.54&12.77&12.48&12.91&15.09&23.36\\
%&GraphDKL* &10.77&9.55&9.04&8.64&8.41&8.22&8.16&8.11&8.12&8.41&20.18&10.16&8.90&8.14&7.55&7.15&6.92&6.32&5.87&5.78\\
&GraphDKL & \textbf{10.32}&\textbf{8.92}&\textbf{8.13}&\textbf{7.56}&\textbf{7.23}&\textbf{6.82}&\textbf{6.64}&\textbf{5.96}&\textbf{5.49}&\textbf{5.33}& \textbf{12.38}&\textbf{6.73}&\textbf{6.26}&\textbf{6.00}&\textbf{5.85}&\textbf{5.76}&\textbf{5.71}&\textbf{5.67}&\textbf{5.58}&\textbf{5.38}\\

\hline

\end{tabular}
    
\end{adjustbox}
\label{tab:blogcatalog}
\vspace{-0.4cm}
\end{table*}

\begin{comment}
\begin{figure*}[h!]
  \centering
  \includegraphics[scale=0.55]{}\\
  \vspace{-0.2cm}
  \subfigure[BlogCatalog-0.5]{\includegraphics[width=0.166\textwidth]{}}
  \subfigure[BlogCatalog-1]{\includegraphics[width=0.15\textwidth]{}}
  %\hspace{-5cm}\textbf{\rotatebox{90}{Retained}}
  \subfigure[BlogCatalog-2]{\includegraphics[width=0.155\textwidth]{}}
  \subfigure[Flickr-0.5]{\includegraphics[width=0.15\textwidth]{}}
  \subfigure[Flickr-1]{\includegraphics[width=0.155\textwidth]{}}
  \subfigure[Flickr-2]{\includegraphics[width=0.155\textwidth]{}}
  
  \caption{Ablation study on five different kernels. The polynomial kernel is a severe outlier on the Flickr dataset, thus its performance is not plotted on Flickr.}
  \label{fig:kernel_ablation}
\vspace{-0.5cm}
\end{figure*}
\end{comment}

\begin{figure*}[t!]

  \centering
  \begin{minipage}[b]{0.75\textwidth}
  \centering
  \includegraphics[scale=0.35]{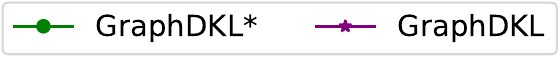}\\
  \vspace{-0.0cm}
  \subfigure[{\fontsize{6.5}{7}\selectfont BlogCatalog-0.5}]{\includegraphics[width=0.17\textwidth]{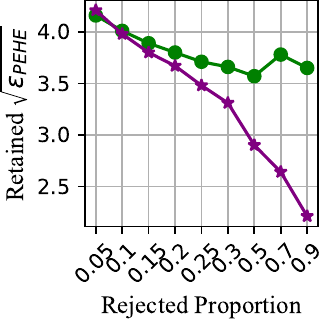}}
  \subfigure[{\fontsize{6.5}{7}\selectfont BlogCatalog-0.5}]{\includegraphics[width=0.15\textwidth]{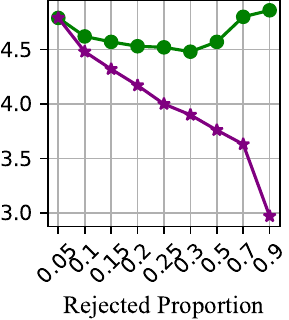}}
  \subfigure[{\fontsize{6.5}{7}\selectfont BlogCatalog-2}]{\includegraphics[width=0.14\textwidth]{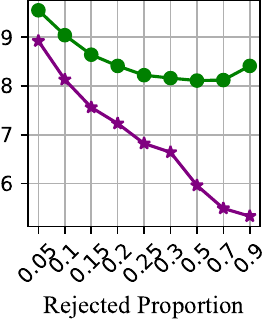}}
  \subfigure[{\fontsize{6.5}{7}\selectfont Flickr-0.5}]{\includegraphics[width=0.15\textwidth]{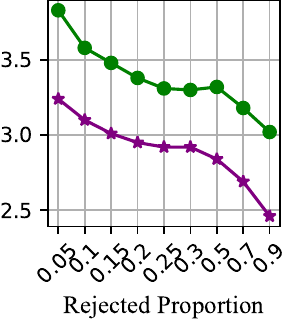}}
  \subfigure[{\fontsize{6.5}{7}\selectfont Flickr-1}]{\includegraphics[width=0.15\textwidth]{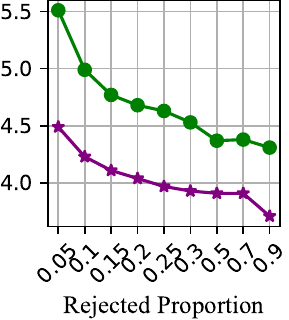}}
  \subfigure[{\fontsize{6.5}{7}\selectfont Flickr-2}]{\includegraphics[width=0.146\textwidth]{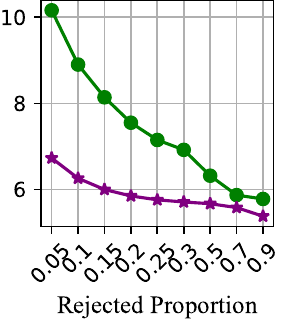}}
  \caption{Ablation study on Lipschitz constraint of the neural mapping. Lipschitz-constrained GraphDKL has a clear performance gain over GraphDKL* without such constraint.}
  \label{fig:Lipschitz_macro}
  \end{minipage}
  \hspace{0cm}
  \begin{minipage}[b]{0.20\textwidth}
  \centering
    \includegraphics[scale=0.4]{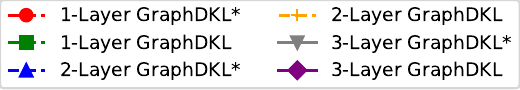}\\
    \includegraphics[width=0.7\textwidth]{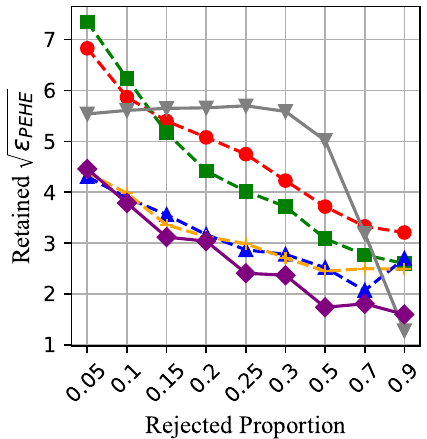}
    \caption{Rejection performance with different GraphDKL variants.}
    \label{fig:Lipschitz_micro}
  \end{minipage}

\vspace{-0.5cm}
\end{figure*}

\subsection{Experimental Setup}

\subsubsection{Dataset} 
We adopt two public benchmarks with networked observational data: \textbf{Blogcatalog \cite{guo2020learning}}  and \textbf{Flickr \cite{guo2020learning}}. Both BlogCatalog and Flickr datasets are processed and simulated in the same practice in \cite{guo2020learning}. For both datasets, three settings are created with $k$ = 0.5, 1, and 2, respectively, where $k$ denotes the magnitude of the imbalance in the semi-synthetic dataset. The higher the $k$ value, the more imbalanced the dataset is. In total, we evaluate the model performance in six different scenarios.

\subsubsection{Metric}
We use precision in estimation of heterogeneous effect (PEHE) \cite{shalit2017estimating}, a well-established metric defined as $\sqrt{\epsilon_{\text{PEHE}}}=\sqrt{\Sigma_{i=1}^{N}((Y_{i,t=1}-Y_{i,t=0})-(\mu_{i,t=1}-\mu_{i,t=0}))^{2}/N}$ for measuring the treatment estimation accuracy at the individual level. The lower the $\sqrt{\epsilon_{\text{PEHE}}}$, the better the performance.

\subsubsection{Baselines} Note, that our proposed GraphDKL is the first model to handle causal effect estimation with uncertainty on graph data. To obtain better comparisons, we share the learned node representation with the other baselines which can only be operated on the non-graph data: BART \cite{chipman2010bart}, BCFRMMD \cite{jesson2020identifying}, BCEVAE \cite{jesson2020identifying}, and CMGP~\cite{alaa2017bayesian}. %Additionally, to investigate the influence of the Lipschitzness, we also test the model GraphDKL* without the Lipschitz constrant.

\subsubsection{Evaluation Scheme on Uncertainty Quantification}

We randomly split each dataset into train/val/test with a 3/1/1 ratio. To evaluate the most effective uncertainty-aware method (a.k.a. rejection method), we reject the estimations with the highest uncertainty and calculated the $\sqrt{\epsilon_{\text{PEHE}}}$ over the retained samples: the lower the retained $\sqrt{\epsilon_{\text{PEHE}}}$, the better the rejection method. As setting an uncertainty threshold for rejection can be domain-specific in real-world cases, here we use the specific uncertainty threshold given by each method that rejects a certain proportion of the top most-uncertain test samples. We test on an increasing proportion $\{0\%, 5\%, 10\%, 15\%, 20\%, 25\%, 30\%, 50\%, 70\%, 90\%\}$ in our experiments, where the $\sqrt{\epsilon_{\text{PEHE}}}$ scores are reported for all methods from the same amount of retained test samples.

\subsection{Rejection Policy Performance}

Since the main task of this paper is to explore the pivot of positivity assumption with the uncertainty-aware model for causal effect estimation on graph data. We compare our proposed  GraphDKL with various rejection methods with the main results shown in Table \ref{tab:blogcatalog}. When compared to other rejection methods, our method GraphDKL always initializes with a lower $\sqrt{\epsilon_{\text{PEHE}}}$ at 0\% rejection rate, even though all the other baselines designed for independent data leverage the same learned node representations from the GraphSAGE convolution. Furthermore, GraphDKL outperforms all the other models over the retained test set in terms of the following key performance: (1) it keeps rejecting the bad estimation while preserving the lowest $\sqrt{\epsilon_{\text{PEHE}}}$ on both datasets under most settings; (2) it has the fastest error convergence with an increased rejection rate.

\subsection{Ablation Study on Lipschitzness}

We conduct a detailed ablation study on the Lipschitz constraint. We use GraphDKL and GraphDKL* to respectively denote variants with and without this constraint. As shown in \ref{fig:Lipschitz_macro}, GraphDKL is superior to GraphDKL* across all the scenarios. Note, that the base model's performance on BlogCatalog datasets has a bouncing-back retained $\sqrt{\epsilon_{\text{PEHE}}}$ when rejecting more samples on the test set. Additionally, we compare its influence to GNNs with varying capacity by using different graph convolution layers. 
%It is noted that the feature collapse issue results in the inactive performance of 3-Layer GraphDKL* in selecting the unreliable estimation, its performance is nothing other than a Random rejection up to 30\% sample rejection as shown in Figure \ref{fig:Lipschitz_micro}.  
%While the performance of low-capacity GraphDKL* (1-Layer and 2-Layer GraphDKL*) shown in Figure \ref{fig:Lipschitz_micro} may underfit the dataset even though their outputted representation does not collapse significantly. 
Based on the results in Figure \ref{fig:Lipschitz_micro}, the Lipschitz-constrained 3-Layer GraphDKL has the best rejection performance as shown in Figure \ref{fig:Lipschitz_micro} by decoupling the collapsed representation to get more accurate uncertainty of each estimation, while the proposed spectral norm can effectively bring performance gain for different GNN variants in uncertainty-aware counterfactual prediction.
%\end{comment}

\vspace{-0.1cm}
\section{Conclusion}
We investigate the violation of the positivity assumption for causal effect estimation on graph data and take a novel perspective to create a safer causal estimator on graph data -- quantifying the estimation uncertainty. Extensive experiments on the two widely used semi-synthetic graph datasets show the superiority of our proposed Lipschitz GraphDKL over the other baselines in terms of identifying high-risk estimations.

%Additionally, we identify a performance bottleneck caused by feature collapse during node representation learning with GNN. To address the issue, we propose the Lipschitz-constrained GNN via spectral normalization to decouple the collapsed node representation for the downstream DKL modules, where causal effects are estimated and proper uncertainty can be statistically assigned to the estimation to identify the violation of positivity. Thus, the decision maker can be informed of the potential risk of the causal estimation. To make our proposed GraphDKL framework scalable to large graph data for causal effect estimations, we implement sparse variational optimization to reduce the time complexity. 

\vspace{-0.1cm}
\section{Acknowledgement}
This work is supported by the Australian Research Council under the streams of Future Fellowship (No. FT210100624), Discovery Project (No. DP190101985), Discovery Early Career Researcher Award (No. DE230101033), and Industrial Transformation Training Centre (No. IC200100022). 

\vspace{-0.1cm}
\bibliographystyle{IEEEtran}
\bibliography{graphdkl}

% Generated by IEEEtran.bst, version: 1.14 (2015/08/26)
\begin{thebibliography}{10}
\providecommand{\url}[1]{#1}
\csname url@samestyle\endcsname
\providecommand{\newblock}{\relax}
\providecommand{\bibinfo}[2]{#2}
\providecommand{\BIBentrySTDinterwordspacing}{\spaceskip=0pt\relax}
\providecommand{\BIBentryALTinterwordstretchfactor}{4}
\providecommand{\BIBentryALTinterwordspacing}{\spaceskip=\fontdimen2\font plus
\BIBentryALTinterwordstretchfactor\fontdimen3\font minus
  \fontdimen4\font\relax}
\providecommand{\BIBforeignlanguage}[2]{{%
\expandafter\ifx\csname l@#1\endcsname\relax
\typeout{** WARNING: IEEEtran.bst: No hyphenation pattern has been}%
\typeout{** loaded for the language `#1'. Using the pattern for}%
\typeout{** the default language instead.}%
\else
\language=\csname l@#1\endcsname
\fi
#2}}
\providecommand{\BIBdecl}{\relax}
\BIBdecl

\bibitem{guo2020learning}
R.~Guo, J.~Li, and H.~Liu, ``Learning individual causal effects from networked
  observational data,'' in \emph{WSDM}, 2020.

\bibitem{jesson2021quantifying}
A.~Jesson \emph{et~al.}, ``Quantifying ignorance in individual-level
  causal-effect estimates under hidden confounding,'' in \emph{ICML}, 2021.

\bibitem{jesson2020identifying}
A.~Jesson, S.~Mindermann, U.~Shalit, and Y.~Gal, ``Identifying causal-effect
  inference failure with uncertainty-aware models,'' \emph{In NIPS}, 2020.

\bibitem{van2021feature}
J.~van Amersfoort, L.~Smith, A.~Jesson, O.~Key, and Y.~Gal, ``On feature
  collapse and deep kernel learning for single forward pass uncertainty,''
  \emph{arXiv preprint arXiv:2102.11409}, 2021.

\bibitem{behrmann2019invertible}
J.~Behrmann, W.~Grathwohl, R.~T. Chen, D.~Duvenaud, and J.-H. Jacobsen,
  ``Invertible residual networks,'' in \emph{ICML}, 2019.

\bibitem{shalit2017estimating}
U.~Shalit, F.~D. Johansson, and D.~Sontag, ``Estimating individual treatment
  effect: generalization bounds and algorithms,'' in \emph{ICML}, 2017.

\bibitem{shi2019adapting}
C.~Shi, D.~Blei, and V.~Veitch, ``Adapting neural networks for the estimation
  of treatment effects,'' \emph{In NIPS}, 2019.

\bibitem{zheng2023dream}
S.~Zheng, H.~Yin, T.~Chen, Q.~V.~H. Nguyen, W.~Chen, and L.~Zhao, ``Dream:
  Adaptive reinforcement learning based on attention mechanism for temporal
  knowledge graph reasoning,'' \emph{SIGIR}, 2023.

\bibitem{wang2021rethinking}
D.-B. Wang, L.~Feng, and M.-L. Zhang, ``Rethinking calibration of deep neural
  networks: Do not be afraid of overconfidence,'' \emph{In NIPS}, 2021.

\bibitem{chipman2010bart}
H.~A. Chipman, E.~I. George, and R.~E. McCulloch, ``Bart: Bayesian additive
  regression trees,'' \emph{The Annals of Applied Statistics}, 2010.

\bibitem{alaa2017bayesian}
A.~Alaa and M.~Van Der~Schaar, ``Bayesian inference of individualized treatment
  effects using multi-task gaussian processes,'' \emph{In NIPS}, 2017.

\bibitem{zhang2020learning}
Y.~Zhang \emph{et~al.}, ``Learning overlapping representations for the
  estimation of individualized treatment effects,'' in \emph{AISTATS}, 2020.

\bibitem{hamilton2017inductive}
W.~Hamilton, Z.~Ying, and J.~Leskovec, ``Inductive representation learning on
  large graphs,'' \emph{In NIPS}, 2017.

\bibitem{gouk2021regularisation}
H.~Gouk, E.~Frank, B.~Pfahringer, and M.~Cree, ``Regularisation of neural
  networks by enforcing lipschitz continuity,'' \emph{Machine Learning}, 2021.

\bibitem{williams2006gaussian}
C.~K. Williams and C.~E. Rasmussen, \emph{Gaussian processes for machine
  learning}.\hskip 1em plus 0.5em minus 0.4em\relax MIT press Cambridge, MA,
  2006.

\bibitem{damianou2013deep}
A.~Damianou and N.~D. Lawrence, ``Deep gaussian processes,'' in \emph{AISTATS},
  2013.

\bibitem{wilson2016deep}
A.~G. Wilson, Z.~Hu, R.~Salakhutdinov, and E.~P. Xing, ``Deep kernel
  learning,'' in \emph{AISTATS}, 2016.

\bibitem{salimbeni2017doubly}
H.~Salimbeni and M.~Deisenroth, ``Doubly stochastic variational inference for
  deep gaussian processes,'' \emph{In NIPS}, 2017.

\end{thebibliography}

\end{document}